\newtheorem{theorem}{Theorem}
\newtheorem{corollary}[theorem]{Corollary}
\DeclareMathOperator*{\argmin}{arg\,min}
\newcommand{\cI}{\mathcal{I}}
\newcommand{\cD}{\mathcal{D}}
\newcommand{\cC}{\mathcal{C}}
\newcommand{\cS}{\mathcal{S}}
\newcommand{\Ex}{\mathbf{E}}
\newcommand{\Px}{\mathbf{P}}
\begin{document}

\title{Beam Search for Feature Selection}
\author{Nicolas Fraiman \\  \href{mailto:fraiman@email.unc.edu}{fraiman@email.unc.edu} 
\and Zichao Li \\ 
\href{mailto:lizichao@live.unc.edu}{lizichao@live.unc.edu}}
\date{}
\maketitle

\begin{abstract}
In this paper, we present and prove some consistency results about the performance of classification models using a subset of features. In addition, we propose to use beam search to perform feature selection, which can be viewed as a generalization of forward selection. We apply beam search to both simulated and real-world data, by evaluating and comparing the performance of different classification models using different sets of features. The results demonstrate that beam search could outperform forward selection, especially when the features are correlated so that they have more discriminative power when considered jointly than individually. Moreover, in some cases classification models could obtain comparable performance using only ten features selected by beam search instead of hundreds of original features.
\end{abstract}

\section{Introduction}\label{sec:intro_beam}
In many machine learning problems, the data can be represented by a matrix, in which the rows represent observations and the columns represent features. High-dimensional data, where the number of features is very large, is challenging for machine learning models in several aspects:
1) Some machine learning models might not work in its original form and need to be modified accordingly, such as by adding regularization terms.
2) Even if the machine learning model works for high-dimensional data, it might induce huge computational cost in terms of runtime and memory.
3) Many features might be noisy or redundant, therefore overfitting might become a serious issue.
4) The model might be hard to interpret because of the large number of features.

One common way to deal with high-dimensional data is to perform dimensionality reduction, which can be further divided into \emph{feature extraction} and \emph{feature selection}. Feature extraction constructs new features by transforming the original features, and includes methods like principal component analysis (PCA) \citep{wold1987principal}, random projection \citep{bingham2001random}, and autoencoder \citep{kramer1991nonlinear}. Feature selection selects a subset of the original features, and includes different types of methods discussed in the next paragraph. Since feature selection does not transform the original features, the result of feature selection is more interpretable than the result of feature extraction, and the interpretability provided by feature selection methods is desirable and useful in many applications. For example, in gene expression analysis, the data might contain hundreds or less samples and thousands or more genes, and the task is to predict whether a sample is a cancer tissue or not using the expression levels of all the genes. In this case, feature selection methods can be used to identify a small subset of genes that are considered as relevant or important for the prediction task, which in turn might provide useful biological insights into the relationship between cancer and the selected genes.

According to \citet{chandrashekar2014survey}, feature selection methods can be divided into three categories:
\begin{itemize}
   \item Wrapper methods use predictive performance to evaluate and select feature subsets. Since exhaustive search of all feature subsets is generally infeasible, usually some heuristic search algorithm is used to construct the feature subsets. Examples of search algorithms include forward selection, backward elimination, branch and bound, simulated annealing, and genetic algorithms \citep{guyon2003introduction}. Wrapper methods are usually very computationally intensive, because a new model is trained for each feature subset. However, they also provide the best predictive performance in most cases, because the predictive performance of the feature subset is directly being optimized.
   \item Filter methods use some relevance criteria to rank the features and select the highly ranked features. Examples of relevance criteria include Fisher score, Relief, and mutual information based criteria \citep{tang2014feature}. Compared to wrapper methods, filter methods are less computationally intensive because the relevance criteria are usually easier to compute. However, they also typically have worse predictive performance, because the relevance criteria is independent of the predictive model, so there is no guarantee on the performance of the predictive model using the features selected based on the relevance criteria.
   \item Embedded methods perform feature selection as part of the model construction process. One typical example is adding a $L_1$ regularization term $\lambda \sum_{i=1}^{p} |w_i|$ to the loss function of a linear model with coefficients $w_i$ \citep{tibshirani1996regression}. As the regularization parameter $\lambda$ increases, many coefficients would shrink to exactly zero, and the features with nonzero coefficients can be considered as being selected by the model. Decision trees like CART also have a built-in mechanism to perform feature selection \citep{breiman1984classification}.
\end{itemize}

Among different wrapper methods, forward selection is perhaps the simplest approach: adding features one by one, and at each step selecting the feature that leads to the best predictive performance when combined with the previously selected features. However, forward selection has one obvious drawback: only keeping the best subset of features at each step during the process of searching might be too greedy. For example, forward selection is not able to select the best discriminating pair of features if the pair does not include the best discriminating feature. A more general approach is beam search, which always keeps the best $k$ subsets of features at each step during the process of searching. This means that the best discriminating pair of features would be selected by beam search as long as one of them is among the top $k$ discriminating features. When the beam width $k$ is 1, beam search reduces to forward selection. Although beam search has been widely used in neural machine translation systems \citep{sutskever2014sequence}, its effectiveness as a general feature selection algorithm for classification models has rarely been studied. \citet{gupta2002beam} used (modified) beam search to perform feature selection for automatic defect classification, but their result was only based on one specific classifier (support vector machine) on one specific dataset. We provide a more comprehensive evaluation of the effectiveness of beam search for feature selection by considering several different classifiers and datasets.

The rest of this paper is organized as follows. In Section \ref{sec:theory_beam}, we develop and prove some consistency results about the performance of classification models using a subset of features. In Section \ref{sec:algorithm_beam}, we propose to use beam search to perform feature selection, which can be viewed as a generalization of forward selection. In Section \ref{sec:simulation_beam} and \ref{sec:applications_beam}, we apply beam search to simulated and real-world data, by evaluating and comparing the performance of different classification models using different sets of features. In Section \ref{sec:discussion_beam}, we conclude with a discussion.

\section{Problem Formulation and Consistency Result}\label{sec:theory_beam}
Consider the following binary classification problem. Let $X = (x_1, \ldots, x_p) \in \mathbb{R}^p$ and $Y \in \{0, 1\}$. We are given training data $\cD_n = \{(X_1, Y_1), \ldots, (X_n, Y_n)\}$ of i.i.d.\ samples with the same distribution as $(X, Y)$. Suppose we would like to perform classification with feature selection by selecting a subset of features of size $d$ from all $p$ features. Let $\cI$ denote the set of all subsets $I \subset \{1, \ldots, p\}$ of size $d$, and let $X(I) = (x_i)_{i \in I} \in \mathbb{R}^d$. Let $\cC$ denote the family of classifiers $g: \mathbb{R}^d \to \{0, 1\}$.

For a classifier $g \in \cC$ and a subset $I \in \cI$, define the expected risk as
\begin{equation*}
    L(g, I) = \Px(g(X(I)) \neq Y) = \Ex[\mathbf{1}_{g(X(I)) \neq Y}],
\end{equation*}
and the empirical risk as
\begin{equation*}
    \hat{L}_n(g, I) = \frac{1}{n} \sum_{j=1}^n \mathbf{1}_{g(X_j(I))\neq Y_j}.
\end{equation*}
The empirically optimal classifier $g_n^*$ and subset $I_n^*$ are obtained by minimizing the empirical risk:
\begin{equation*}
    g_n^*, I_n^* = \argmin_{g \in \cC, I \in \cI} \hat{L}_n(g, I).
\end{equation*}

For the empirically optimal classifier $g_n^*$ and subset $I_n^*$, the following VC-type inequality holds:

\begin{theorem}\label{thm:vc}
Let $\cS(\cC, n)$ denote the $n$-th shatter coefficient of $\cC$. Then,
\begin{equation*}
    \Px \left( L(g_n^*, I_n^*) - \inf_{g \in \cC, I \in \cI} L(g, I) > \epsilon \right) \le 8\binom{p}{d}\cS(\cC, n)e^{-n\epsilon^2/128}.
\end{equation*}
\end{theorem}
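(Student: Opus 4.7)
The plan is to reduce the excess-risk statement to a uniform-deviation statement over the product class $\cC \times \cI$, then bound this uniform deviation by a union bound over the $\binom{p}{d}$ feature subsets combined with the classical Vapnik--Chervonenkis inequality applied separately to $\cC$ for each fixed $I$.

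First I would establish the standard ``twice the uniform deviation'' reduction. For any fixed $(g, I) \in \cC \times \cI$, write
\begin{equation*}
L(g_n^*, I_n^*) - L(g, I) = \bigl[L(g_n^*, I_n^*) - \hat{L}_n(g_n^*, I_n^*)\bigr] + \bigl[\hat{L}_n(g_n^*, I_n^*) - \hat{L}_n(g, I)\bigr] + \bigl[\hat{L}_n(g, I) - L(g, I)\bigr].
\end{equation*}
By the definition of $(g_n^*, I_n^*)$ the middle bracket is $\le 0$, so taking the infimum over $(g, I)$ on the left and dominating the other two brackets by the supremum of the deviations gives
\begin{equation*}
L(g_n^*, I_n^*) - \inf_{g \in \cC, I \in \cI} L(g, I) \;\le\; 2 \sup_{g \in \cC,\, I \in \cI} \bigl| L(g, I) - \hat{L}_n(g, I) \bigr|.
\end{equation*}
Hence the event in question is contained in $\{\sup_{g, I} |L(g, I) - \hat{L}_n(g, I)| > \epsilon/2\}$.

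Next I would strip off the feature subset by a union bound. Writing
\begin{equation*}
\Px\Bigl(\sup_{g, I} |L(g, I) - \hat{L}_n(g, I)| > \epsilon/2\Bigr) \;\le\; \sum_{I \in \cI} \Px\Bigl(\sup_{g \in \cC} |L(g, I) - \hat{L}_n(g, I)| > \epsilon/2\Bigr)
\end{equation*}
produces the combinatorial factor $|\cI| = \binom{p}{d}$. For each fixed $I$, the projection $\pi_I : x \mapsto x(I)$ is deterministic, so the composed family $\{g \circ \pi_I : g \in \cC\}$ of $\{0,1\}$-valued classifiers on $\mathbb{R}^p$ has shatter coefficient at most $\cS(\cC, n)$ (the number of labellings of any $n$ points by $\cC \circ \pi_I$ equals the number of labellings of their projections by $\cC$). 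Applying the classical VC inequality $\Px(\sup_g |L(g) - \hat{L}_n(g)| > \eta) \le 8 \cS(\cC, n) e^{-n \eta^2/32}$ with $\eta = \epsilon/2$ yields $8 \cS(\cC, n) e^{-n \epsilon^2/128}$ for each $I$, and summing gives the stated bound.

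I do not anticipate a genuine obstacle: the argument is a template application of symmetrization plus VC, and the only point that needs care is ensuring that the shatter coefficient does not inflate when we rewrite each $g \in \cC$ on $\mathbb{R}^d$ as a classifier on $\mathbb{R}^p$ via $\pi_I$. Keeping the combinatorial $\binom{p}{d}$ factor \emph{outside} the VC invocation, via the union bound above, is what makes the constants $8$, $128$, and $\binom{p}{d}\cS(\cC, n)$ line up as stated, rather than attempting to re-prove a combined shatter-coefficient bound for $\cC \times \cI$ directly.
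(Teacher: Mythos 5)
Your proposal is correct and follows essentially the same route as the paper: the standard reduction of the excess risk to twice the uniform deviation over $\cC \times \cI$, a union bound over the $\binom{p}{d}$ feature subsets, and the classical VC inequality applied to $\cC$ for each fixed $I$ at level $\epsilon/2$, which produces exactly the stated constants. Your added remark that the projection $\pi_I$ does not inflate the shatter coefficient is a worthwhile clarification of a point the paper leaves implicit, but it does not change the argument.
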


\begin{proof}
For any specific $I \in \cI$, by Theorem 12.6 in \citep{devroye2013probabilistic}, the following classic VC inequality holds:
\begin{equation*}
    \Px \left( \sup_{g \in \cC} |\hat{L}_n(g, I) - L(g, I)| > \epsilon \right) \le 8\cS(\cC, n)e^{-n\epsilon^2/32}.
\end{equation*}
Recall that $\cI$ is the set of all subsets of $\{1 ,\ldots, p\}$ of size $d$, so $|\cI| = \binom{p}{d}$. Therefore we have the following union bound:
\begin{align*}
    \Px \left( \sup_{g \in \cC, I \in \cI} |\hat{L}_n(g, I) - L(g, I)| > \epsilon \right) &\le \sum_{I \in \cI} \Px \left( \sup_{g \in \cC} |\hat{L}_n(g, I) - L(g, I)| > \epsilon \right) \\
    &\le 8\binom{p}{d}\cS(\cC, n) e^{-n\epsilon^2/32}.
\end{align*}
Notice that
\begin{align*}
    L(g_n^*, I_n^*) - \inf_{g \in \cC, I \in \cI} L(g, I) &= L(g_n^*, I_n^*) - \hat{L}_n(g_n^*, I_n^*) + \hat{L}_n(g_n^*, I_n^*) - \inf_{g \in \cC, I \in \cI} L(g, I) \\
    &\le L(g_n^*, I_n^*) - \hat{L}_n(g_n^*, I_n^*) + \sup_{g \in \cC, I \in \cI} |\hat{L}_n(g, I) - L(g, I)| \\
    &\le 2 \sup_{g \in \cC, I \in \cI} |\hat{L}_n(g, I) - L(g, I)|.
\end{align*}
Hence
\begin{align*}
    \Px \left( L(g_n^*, I_n^*) - \inf_{g \in \cC, I \in \cI} L(g, I) > \epsilon \right) &\le \sum_{I \in \cI} \Px \left( \sup_{g \in \cC, I \in \cI} |\hat{L}_n(g, I) - L(g, I)| > \epsilon/2 \right) \\
    &\le 8\binom{p}{d}\cS(\cC, n)e^{-n\epsilon^2/128}.
    \qedhere
\end{align*}
\end{proof}

This gives the following corollary about consistency:

\begin{corollary}\label{cor:consistency}
\begin{equation*}
    \Ex[L(g_n^*, I_n^*)] - \inf_{g \in \cC, I \in \cI} L(g, I) \le 16 \sqrt{\frac{\log(8\binom{p}{d}eS(\cC, n))}{2n}}.
\end{equation*}
\end{corollary}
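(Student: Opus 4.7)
My plan is to derive the corollary from Theorem~\ref{thm:vc} using the layer-cake formula together with Jensen's inequality. The key observation is that integrating the tail bound directly for $\Ex[Z]$ produces a weaker bound than claimed, so the trick is to bound $\Ex[Z^2]$ first and then invoke Jensen.

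Let me write $Z = L(g_n^*, I_n^*) - \inf_{g \in \cC, I \in \cI} L(g, I)$. By the definition of the infimum, $Z \ge 0$. Setting $C = 8\binom{p}{d}\cS(\cC,n)$, Theorem~\ref{thm:vc} gives
\begin{equation*}
    \Px(Z > \epsilon) \le C\, e^{-n\epsilon^2/128} \quad \text{for all } \epsilon > 0.
\end{equation*}

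Next I use the identity $\Ex[Z^2] = \int_0^\infty 2t\, \Px(Z > t)\, dt$ and split the integral at the threshold $t_0 = \sqrt{128\log C / n}$ (the point where the tail bound crosses the trivial bound $1$). On $[0, t_0]$ use $\Px(Z > t) \le 1$; on $[t_0, \infty)$ use the tail bound. The first piece contributes $t_0^2 = 128 \log C / n$. The second piece evaluates in closed form because $2t e^{-nt^2/128}$ is an exact derivative: $\int_{t_0}^\infty 2tC e^{-nt^2/128}\, dt = (128 C / n) e^{-nt_0^2/128} = 128/n$ by the choice of $t_0$. Adding these gives
\begin{equation*}
    \Ex[Z^2] \le \frac{128(\log C + 1)}{n} = \frac{128 \log(eC)}{n}.
\end{equation*}

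Finally, Jensen's inequality yields $\Ex[Z] \le \sqrt{\Ex[Z^2]} \le \sqrt{128 \log(eC)/n}$, and rewriting $\sqrt{128/n} = 16/\sqrt{2n}$ recovers exactly the stated bound. The only subtlety — and what I regard as the main conceptual step — is deciding to integrate $2t\,\Px(Z > t)$ rather than $\Px(Z > t)$ directly; a direct computation of $\Ex[Z]$ via the layer-cake formula gives an extra cross-term of order $1/\sqrt{n \log C}$ that prevents one from collecting the constants into the clean $\log(eC)$ form. Everything else is a routine Gaussian-tail integral and a choice of the truncation level, both of which are elementary.
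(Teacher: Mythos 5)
Your proof is correct and follows essentially the same route as the paper: bound $\Ex[Z^2]$ by the layer-cake formula with a truncation at the level where the tail bound meets $1$, then apply Jensen; your threshold $t_0^2 = 128\log C/n$ is exactly the paper's choice of $u$, and writing the layer cake as $\int_0^\infty 2t\,\Px(Z>t)\,dt$ rather than $\int_0^\infty \Px(Z^2>t)\,dt$ is only a change of variables. No gaps.
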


\begin{proof}
For simplicity of notation, let $Z = L(g_n^*, I_n^*) - \inf_{g \in \cC, I \in \cI} L(g, I) \ge 0$. Theorem \ref{thm:vc} is equivalent to
\begin{equation*}
    P(Z > \epsilon) \le 8\binom{p}{d}\cS(\cC, n)e^{-n\epsilon^2/128}.
\end{equation*}
Notice that for any $u > 0$, we have
\begin{align*}
    \Ex[Z^2] &= \int_0^\infty \Px(Z^2 > t) dt \\
    &= \int_0^u \Px(Z^2 > t) dt + \int_u^\infty \Px(Z^2 > t) dt \\
    &\le u + \int_u^\infty 8\binom{p}{d}\cS(\cC, n)e^{-nt/128} dt \\
    &= u + 1024\binom{p}{d}\cS(\cC, n) \frac{e^{-nu/128}}{n}.
\end{align*}
Since the above inequality holds for any $u > 0$, we choose the $u$ that minimizes the right hand side. Plugging in $u = 128\log(8\binom{p}{d}\cS(C,n))/n$, and the inequality becomes
\begin{equation*}
    \Ex[Z^2] \le \frac{128\log(8\binom{p}{d}\cS(C,n))}{n} + \frac{128}{n} = \frac{128\log(8\binom{p}{d}e\cS(C,n))}{n}.
\end{equation*}
Consequently,
\begin{equation*}
    \Ex[L(g_n^*, I_n^*)] - \inf_{g \in \cC, I \in \cI} L(g, I) = \Ex[Z] \le \sqrt{\Ex[Z^2]} \le 16 \sqrt{\frac{\log(8\binom{p}{d}eS(\cC, n))}{2n}}.
    \qedhere
\end{equation*}
\end{proof}

We can characterize the optimal classifier $g^*$ and subset $I^*$ that minimize the expected risk $L(g, I)$ if we make an additional assumption.

\begin{theorem}
Assume that there exist a subset $I^*$ of size $d$ such that
\begin{equation*}
    \Ex[Y \mid X] = \Ex[Y \mid X(I^*)],
\end{equation*}
For any $x \in \mathbb{R}^d$, let
\begin{equation*}
    \eta(x) = \Ex[Y \mid X(I^*) = x],
\end{equation*}
and 
\begin{equation*}
    g^*(x) = 
    \begin{cases}
    1 & \text{if } \eta(x) > 1/2, \\
    0 & \text{otherwise}.
    \end{cases}
\end{equation*}
Then for any classifier $g$ and subset $I$ of size $d$, we have
\begin{equation*}
    L(g^*, I^*) \le L(g, I).
\end{equation*}
Equivalently,
\begin{equation*}
    L(g^*, I^*) = \inf_{g \in \cC, I \in \cI} L(g, I).
\end{equation*}
\end{theorem}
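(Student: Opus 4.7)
The plan is to derive this as a direct consequence of the classical Bayes classifier optimality, combined with the sufficiency assumption $\Ex[Y\mid X] = \Ex[Y\mid X(I^*)]$. The key observation is that $g^*$ evaluated on $X(I^*)$ coincides almost surely with the full-information Bayes classifier built on the entire vector $X$, so it attains the minimum possible risk among all measurable classifiers of $X$, which in particular dominates any classifier that only sees a subset $X(I)$.

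First, I would invoke the standard fact that for any measurable $h : \mathbb{R}^p \to \{0,1\}$,
\begin{equation*}
\Px(h(X) \neq Y) \ge \Ex[\min(\Ex[Y\mid X],\, 1 - \Ex[Y\mid X])],
\end{equation*}
with equality attained by the Bayes rule $h^{\mathrm{Bayes}}(X) = \mathbf{1}_{\Ex[Y\mid X] > 1/2}$. Call this lower bound $L^{\mathrm{Bayes}}$. Since any $(g,I)$ pair produces a prediction $g(X(I))$ that is itself a measurable function of $X$, this immediately yields $L(g, I) \ge L^{\mathrm{Bayes}}$ for every $g \in \cC$ and every $I \in \cI$.

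Next, I would use the hypothesis to identify $g^*(X(I^*))$ with the full Bayes rule. From $\Ex[Y \mid X] = \Ex[Y \mid X(I^*)]$ it follows that $\eta(X(I^*)) = \Ex[Y \mid X]$ almost surely, hence $g^*(X(I^*)) = \mathbf{1}_{\Ex[Y \mid X] > 1/2} = h^{\mathrm{Bayes}}(X)$ almost surely. By the equality case of the Bayes bound, $L(g^*, I^*) = L^{\mathrm{Bayes}}$. Combining the two facts gives $L(g^*, I^*) \le L(g, I)$ for every $(g, I)$, and in particular $L(g^*, I^*) = \inf_{g \in \cC, I \in \cI} L(g, I)$.

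The argument is largely bookkeeping and I do not anticipate a real obstacle; the optimization over feature subsets is absorbed into the larger optimization over measurable functions of the full vector $X$. The entire content of the theorem is concentrated in the sufficiency assumption $\Ex[Y\mid X] = \Ex[Y\mid X(I^*)]$: without it, one could only conclude that the Bayes rule built from $X(I^*)$ is optimal among classifiers using $X(I^*)$, and could not compare it against classifiers built from other subsets.
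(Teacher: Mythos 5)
Your proof is correct, and it takes a cleaner route than the paper's. You reduce everything to the classical Bayes-optimality theorem for the \emph{full} feature vector $X$: since $g(X(I))$ is a measurable function of $X$ for every $(g,I)$, its risk is bounded below by $L^{\mathrm{Bayes}} = \Ex[\min(\Ex[Y\mid X],\,1-\Ex[Y\mid X])]$, and the sufficiency assumption makes $g^*(X(I^*))$ coincide almost surely with the Bayes rule on $X$, so it attains that bound. The paper instead re-derives the comparison from scratch: it conditions on $X(I\cup I^*)=x^+$, expands the conditional error probabilities of $g(X(I))$ and $g^*(X(I^*))$, and shows their difference equals $(2\eta(x)-1)\bigl(\mathbf{1}_{g^*(x)=1}-\mathbf{1}_{g(x')=1}\bigr)\ge 0$ pointwise before integrating over $x^+$; implicitly this uses the tower property to get $\Ex[Y\mid X(I\cup I^*)]=\Ex[Y\mid X(I^*)]$ from the hypothesis, exactly the step your argument handles by passing to the full vector $X$. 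Your version is shorter and modular (it delegates the inequality to a cited standard fact and isolates the role of the sufficiency assumption as you note at the end); the paper's version is self-contained and makes the pointwise mechanism of the inequality explicit. Both establish the same conclusion.
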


\begin{proof}
Recall that $L(g, I) = \Px(g(X(I)) \neq Y)$. For any classifier $g$ and subset $I$, let $X(I \cup I^*) = x^+ \in \mathbb{R}^{|I \cup I^*|}$, $X(I) = x' \in \mathbb{R}^d$, and $X(I^*) = x \in \mathbb{R}^d$. The conditional error probability of classifier $g$ on subset $I$ given $X(I \cup I^*) = x^+$ can be expressed as
\begin{align*}
    &\Px(g(X(I)) \neq Y \mid X(I \cup I^*) = x^+) \\
    &= 1 - \Px(g(X(I)) = Y \mid X(I \cup I^*) = x^+) \\
    &= 1 - \Px(g(X(I)) = 1, Y = 1 \mid X(I \cup I^*) = x^+) - \Px(g(X(I)) = 0, Y = 0 \mid X(I \cup I^*) = x^+) \\
    &= 1 - \Px(g(X(I)) = 1 \mid X(I \cup I^*) = x^+) \cdot \Px(Y = 1 \mid X(I \cup I^*) = x^+) \\
    &- \Px(g(X(I)) = 0 \mid X(I \cup I^*) = x^+) \cdot \Px(Y = 0 \mid X(I \cup I^*) = x^+)] \\
    &= 1 - \Px(g(X(I)) = 1 \mid X(I) = x') \cdot \Px(Y = 1 \mid X(I^*) = x) \\
    &- \Px(g(X(I)) = 0 \mid X(I) = x') \cdot \Px(Y = 0 \mid X(I^*) = x) \\
    &= 1 - \mathbf{1}_{g(x') = 1} \eta(x) - \mathbf{1}_{g(x') = 0} (1 - \eta(x)).
\end{align*}
Similarly, 
\begin{align*}
    &\Px(g^*(X(I^*)) \neq Y \mid X(I \cup I^*) = x^+) \\
    &= 1 - \Px(g^*(X(I^*)) = Y \mid X(I \cup I^*) = x^+) \\
    &= 1 - \Px(g^*(X(I^*)) = 1, Y = 1 \mid X(I \cup I^*) = x^+) - \Px(g^*(X(I^*)) = 0, Y = 0 \mid X(I \cup I^*) = x^+) \\
    &= 1 - \Px(g^*(X(I^*)) = 1 \mid X(I \cup I^*) = x^+) \cdot \Px(Y = 1 \mid X(I \cup I^*) = x^+) \\
    &- \Px(g^*(X(I^*)) = 0 \mid X(I \cup I^*) = x^+) \cdot \Px(Y = 0 \mid X(I \cup I^*) = x^+)] \\
    &= 1 - \Px(g^*(X(I^*)) = 1 \mid X(I^*) = x) \cdot \Px(Y = 1 \mid X(I^*) = x) \\
    &- \Px(g^*(X(I^*)) = 0 \mid X(I^*) = x) \cdot \Px(Y = 0 \mid X(I^*) = x) \\
    &= 1 - \mathbf{1}_{g^*(x) = 1} \eta(x) - \mathbf{1}_{g^*(x) = 0} (1 - \eta(x)).
\end{align*}
Therefore
\begin{align*}
    &\Px(g(X(I)) \neq Y \mid X(I \cup I^*) = x^+) - \Px(g^*(X(I^*)) \neq Y \mid X(I \cup I^*) = x^+) \\
    &= \eta(x) (\mathbf{1}_{g^*(x) = 1} - \mathbf{1}_{g(x') = 1}) + (1 - \eta(x)) (\mathbf{1}_{g^*(x) = 0} - \mathbf{1}_{g(x') = 0}) \\
    &= (2\eta(x) - 1) (\mathbf{1}_{g^*(x) = 1} - \mathbf{1}_{g(x') = 1}) \\
    &\ge 0
\end{align*}
by the definition of $g^*(x)$. Integrating both sides with respect to the distribution of $x^+$, we have
\begin{equation*}
    L(g, I) = \Px(g(X(I)) \neq Y) \ge \Px(g^*(X(I^*)) \neq Y) = L(g^*, I^*)
\end{equation*}
for any classifier $g$ and subset $I$.
\end{proof}

\section{Algorithm}\label{sec:algorithm_beam}
One simple approach to select the subset of features $I$ of size $d$ from all $p$ features is forward selection: adding features into the set $I$ one by one, and at each step $t$ selecting the feature $i$ that leads to the lowest misclassification rate using only the previously selected $t-1$ features and feature $i$. A more general approach is beam search, which works as follows:

\begin{algorithm}[ht]
\begin{enumerate}
    \item Set the beam width $k$.
    \item Increase the step $t$ from 1 to $d$. At each step $t$:
    \begin{enumerate}
        \item If $t=1$, then fit $p$ models using each of the $p$ features individually, and select the best $k$ features corresponding to the best $k$ models in terms of misclassification rate. These $k$ features are viewed as $k$ groups of features of size 1.
        \item Otherwise, for each of the $k$ groups of features of size $t-1$, fit $p-(t-1)$ models by adding each of the remaining $p-(t-1)$ features into the group of features. In total $k(p-(t-1))$ models are fitted, each corresponding to $k(p-(t-1))$ groups of features of size $t$. Among them, select the best $k$ groups of features corresponding to the best $k$ models in terms of misclassification rate.
    \end{enumerate}
    \item Finally, out of the $k$ groups of features of size $d$, select the best group of features corresponding to the model with the lowest misclassification rate.
\end{enumerate}
\caption{Beam search for feature selection}
\label{algo:classification}
\end{algorithm}

The above beam search algorithm for feature selection can be viewed as a generalization of forward selection, and the main difference is that beam search always keeps $k$ candidates during the process of searching, whereas forward selection only keeps 1 candidate. This means that beam search with beam width $k > 1$ is not as greedy as forward selection. For example, the best discriminating pair of features would be selected by beam search as long as one of them is among the top $k$ discriminating features. When the beam width $k = 1$, beam search reduces to forward selection.

\section{Simulation Studies}\label{sec:simulation_beam}
In this section, we evaluate and compare the performance of different classification models using different sets of features on simulated data. More specifically, we consider the following five widely used classification models: $k$-nearest neighbors (KNN) \citep{cover1967nearest}, linear discriminant analysis (LDA) \citep{hastie2009elements}, quadratic discriminant analysis (QDA) \citep{hastie2009elements}, support vector machine (SVM) \citep{cortes1995support}, and logistic regression with $L_1$ regularization \citep{hastie2009elements}. In terms of features, we consider the following three cases: all features, features selected by forward selection, and features selected by beam search. The evaluation metric is the misclassification rate. The number of neighbors in KNN is set to be 15. For logistic regression with $L_1$ regularization, $\lambda$ is selected using cross-validation on the training set. 

We generate simulated data in three different settings. For each setting, we perform 50 simulations, and report the means and standard errors of the misclassification rates. In each simulation, we generate two $n \times p$ data matrices, one as training set and the other as test set. For both of them, the first $n/2$ rows are labeled as positive class, and the last $n/2$ rows are labeled as negative class.

\subsection{Simulation 1}
In the first simulation, the two data matrices $\mathbf{X}_{train}$ and $\mathbf{X}_{test}$ are generated in the following way:
\begin{enumerate}
\item Two $m$-dimensional vectors $u$ and $v$ are generated, and their entries satisfy $u_j \sim U(0,1)$ and $v_j \sim U(-1,0)$ for $1 \le j \le m$.
\item The entries $\mathbf{X}_{ij}$ in $\mathbf{X}_{train}$ and $\mathbf{X}_{test}$ are independent, and they satisfy
\begin{align*}
\mathbf{X}_{ij} =
\begin{cases}
N(u_j,1), &\text{for } 1 \le j \le m \text{ and } 1 \le i \le n/2, \\
N(v_j,1), &\text{for } 1 \le j \le m \text{ and } n/2 < i \le n, \\
N(0,1), &\text{for } j > m.
\end{cases}
\end{align*} 
\end{enumerate}
In other words, the first $m$ columns contain useful signals, where the first $n/2$ rows and the last $n/2$ rows differ in their means. The rest of the $p-m$ columns are just i.i.d.\ standard Gaussian noise. We set $n = 500$, $p = 100$, and $m = 5$. For feature selection algorithms, we set the number of selected features $d=5$, and the beam width $k=5$.

\begin{table}[ht]
\centering
\begin{tabular}{ c c c c c c }
    \hline
    & KNN & LDA & QDA & SVM & Logistic \\
    \hline
    all features & 0.210(0.009) & 0.152(0.007) & 0.355(0.006) & 0.172(0.007) & 0.142(0.006) \\
    forward selection & 0.157(0.008) & 0.146(0.008) & 0.144(0.007) & 0.165(0.009) & 0.138(0.007) \\
    beam search & 0.151(0.008) & 0.137(0.006) & 0.136(0.008) & 0.160(0.009) & 0.134(0.007) \\
    \hline
\end{tabular}
\caption{The means (and standard errors) of the misclassification rate for Simulation 1 over 50 simulations.}
\label{tab:sim1_beam}
\end{table}

The misclassification rates are reported in Table \ref{tab:sim1_beam}. We observe that the classification models trained with only $d=5$ features selected by either forward selection or beam search all perform better than the classification models trained with all $p=100$ features, although the amount of improvement would depend on the specific model. This result agrees with our expectation that when there are many irrelevant features, performing feature selection using either forward selection or beam search could identify the relevant features and lead to improved predictive performance. However, the misclassification rates in the second row are extremely close to those in the third row, indicating that forward selection and beam search perform similarly under this setting. This is also not surprising, because all $m=5$ discriminative features are independent, so even an extremely greedy feature selection algorithm like forward selection could correctly pick out the discriminative features.

\subsection{Simulation 2}
In the second simulation, the two data matrices $\mathbf{X}_{train}$ and $\mathbf{X}_{test}$ are generated in the following way:
\begin{enumerate}
\item The first two columns $(\mathbf{X}_{i1},  \mathbf{X}_{i2})$ in $\mathbf{X}_{train}$ and $\mathbf{X}_{test}$ are jointly normal, and they satisfy
\begin{align*}
(\mathbf{X}_{i1},  \mathbf{X}_{i2}) =
\begin{cases}
N
\left(
(0,0), 
\left(
\begin{smallmatrix}
1 & 0.9 \\
0.9 & 1 
\end{smallmatrix}
\right)
\right),
&\text{for } 1 \le i \le n/2, \\
N
\left(
(0,0), 
\left(
\begin{smallmatrix}
1 & -0.9 \\
-0.9 & 1 
\end{smallmatrix}
\right)
\right),
&\text{for } n/2 < i \le n.
\end{cases}
\end{align*} 
\item The entries $\mathbf{X}_{ij}$ in the rest of the columns in $\mathbf{X}_{train}$ and $\mathbf{X}_{test}$ are independent, and they satisfy
\begin{align*}
\mathbf{X}_{ij} =
\begin{cases}
N(0.3,1), &\text{for } 3 \le j \le 4 \text{ and } 1 \le i \le n/2, \\
N(-0.3,1), &\text{for } 3 \le j \le 4 \text{ and } n/2 < i \le n, \\
N(0,1), &\text{for } j > 4.
\end{cases}
\end{align*} 
\end{enumerate}
In other words, the first four columns contain useful signals, and the rest of the $p-4$ columns are just i.i.d.\ standard Gaussian noise. Among the first four columns, the third and fourth columns are the most discriminative features when considered individually, where the first $n/2$ rows and the last $n/2$ rows differ in their means. The first and second columns have no discriminative power when considered individually, but when considered jointly, they are the most discriminating pair of features, where the first $n/2$ rows and the last $n/2$ rows differ in their correlations. Note that the decision boundary for the first and second columns is nonlinear, so only classification models with nonlinear decision boundary (KNN, QDA, SVM) would be able to detect this pattern. We set $n = 500$ and $p = 10$. For feature selection algorithms, we set the number of selected features $d=2$, and the beam width $k=5$.

\begin{table}[ht]
\centering
\begin{tabular}{ c c c c c c }
    \hline
    & KNN & LDA & QDA & SVM & Logistic \\
    \hline
    all features & 0.226(0.003) & 0.342(0.002) & 0.126(0.002) & 0.202(0.003) & 0.340(0.003) \\
    forward selection & 0.380(0.005) & 0.353(0.004) & 0.354(0.006) & 0.358(0.004) & 0.353(0.005) \\
    beam search & 0.220(0.015) & 0.352(0.004) & 0.184(0.013) & 0.218(0.013) & 0.359(0.005) \\
    \hline
\end{tabular}
\caption{The means (and standard errors) of the misclassification rate for Simulation 2 over 50 simulations.}
\label{tab:sim2_beam}
\end{table}

The misclassification rates are reported in Table \ref{tab:sim2_beam}. We observe that for classification models that have nonlinear decision boundary (KNN, QDA, SVM), they perform much better using features selected by beam search compared to using features selected by forward selection. This result demonstrates the advantage of beam search over forward selection as a feature selection algorithm: the ability to consider two or more features jointly instead of considering features individually. More specifically, under this setting forward selection will pick the third and fourth columns, because they are the most discriminative features when considered individually, and any pair of features that doesn't include the third or fourth column will not be considered. However, for classification models that have nonlinear decision boundary (KNN, QDA, SVM), beam search will pick the first and second columns, because when considered jointly they are the most discriminating pair of features, and this pair of features will be a candidate considered by beam search with suitable beam width.

\subsection{Simulation 3}
In the third simulation, the two data matrices $\mathbf{X}_{train}$ and $\mathbf{X}_{test}$ are generated in the following way:
\begin{enumerate}
\item The first two columns $(\mathbf{X}_{i1},  \mathbf{X}_{i2})$ in $\mathbf{X}_{train}$ and $\mathbf{X}_{test}$ are generated jointly, and they satisfy
\begin{align*}
(\mathbf{X}_{i1},  \mathbf{X}_{i2}) =
\begin{cases}
\text{Uniform in the set } \{(x, y): -3<x<3, -3<y<3, x+y > -0.2 \},
&\text{for } 1 \le i \le n/2, \\
\text{Uniform in the set } \{(x, y): -3<x<3, -3<y<3, x+y < 0.2 \},
&\text{for } n/2 < i \le n.
\end{cases}
\end{align*} 
\item The entries $\mathbf{X}_{ij}$ in the rest of the columns in $\mathbf{X}_{train}$ and $\mathbf{X}_{test}$ are independent, and they satisfy
\begin{align*}
\mathbf{X}_{ij} =
\begin{cases}
U(-1,3), &\text{for } 3 \le j \le 4 \text{ and } 1 \le i \le n/2, \\
U(-3,1), &\text{for } 3 \le j \le 4 \text{ and } n/2 < i \le n, \\
N(0,1), &\text{for } j > 4.
\end{cases}
\end{align*} 
\end{enumerate}
In other words, the first four columns contain useful signals, and the rest of the $p-4$ columns are just i.i.d.\ standard Gaussian noise. Among the first four columns, the first and second columns are less discriminative than the third and fourth columns when considered individually, but when considered jointly, they are the most discriminating pair of features. Note that in this simulation, the decision boundary for the first and second columns is linear, so we expect all classification models to perform better using features selected by beam search compared to using features selected by forward selection. We set $n = 500$ and $p = 10$. For feature selection algorithms, we set the number of selected features $d=2$, and the beam width $k=5$.

\begin{table}[ht]
\centering
\begin{tabular}{ c c c c c c }
    \hline
    & KNN & LDA & QDA & SVM & Logistic \\
    \hline
    all features & 0.038(0.001) & 0.031(0.001) & 0.033(0.001) & 0.036(0.001) & 0.021(0.001) \\
    forward selection & 0.117(0.005) & 0.111(0.006) & 0.116(0.005) & 0.116(0.005) & 0.095(0.006) \\
    beam search & 0.077(0.003) & 0.071(0.004) & 0.071(0.004) & 0.070(0.004) & 0.065(0.002) \\
    \hline
\end{tabular}
\caption{The means (and standard errors) of the misclassification rate for Simulation 3 over 50 simulations.}
\label{tab:sim3_beam}
\end{table}

The misclassification rates are reported in Table \ref{tab:sim3_beam}. As expected, we see that all classification models obtain lower misclassification rates using features selected by beam search compared to using features selected by forward selection. This result again demonstrates that beam search could outperform forward selection when the features are correlated so that they have more discriminative power when considered jointly than individually.

\section{Applications}\label{sec:applications_beam}
In this section, we apply beam search to three gene expression datasets, all of which were proposed and preprocessed by \citet{de2008clustering}. In all datasets, the rows represent different samples, and the columns represent different genes. We evaluate and compare the performance of four different classification models (KNN, LDA, SVM, logistic regression with $L_1$ regularization) using three different sets of features (all features, features selected by forward selection, features selected by beam search). QDA is not included in the comparison because the sample sizes of all three datasets are too small to run QDA. For all datasets, we set $d=10$ as the number of selected features, and $k=5$ as the beam width, as this combination achieves a good compromise between model performance and run time. We perform 5-fold cross validation on the datasets, and report the means and standard errors of the misclassification rates.

\subsection{Breast and Colon Cancer Gene Expression Dataset}
The first dataset consists of $n=104$ samples and $p=182$ genes, and the samples have already been classified into two groups: 62 samples correspond to breast cancer tissues, and 42 samples correspond to colon cancer tissues.

\begin{table}[ht]
\centering
\begin{tabular}{ c c c c c }
    \hline
    & KNN & LDA & SVM & Logistic \\
    \hline
    all features & 0.069(0.026) & 0.107(0.034) & 0.030(0.012) & 0.030(0.020) \\
    forward selection & 0.077(0.019) & 0.050(0.039) & 0.058(0.011) & 0.078(0.025) \\
    beam search & 0.068(0.012) & 0.039(0.010) & 0.040(0.019) & 0.050(0.039) \\
    \hline
\end{tabular}
\caption{The means (and standard errors) of the misclassification rate on the breast and colon cancer gene expression dataset.}
\label{tab:bc_beam}
\end{table}

The misclassification rates are reported in Table \ref{tab:bc_beam}. For all four classification models, we observe that using features selected by beam search achieves lower misclassification rates compared to using features selected by forward selection. In addition, we could also notice that classification models could obtain similar or better performance using only $d=10$ features selected by beam search compared to using all $p=182$ features. In particular, LDA performs much better using only $d=10$ features selected by beam search compared to using all $p=182$ features.

\subsection{Prostate Cancer Gene Expression Dataset}
The second dataset consists of $n=102$ samples and $p=339$ genes, and the samples have already been classified into two groups: 52 samples correspond to prostate cancer tissues, and 50 samples correspond to normal prostate tissues.

\begin{table}[ht]
\centering
\begin{tabular}{ c c c c c }
    \hline
    & KNN & LDA & SVM & Logistic \\
    \hline
    all features & 0.205(0.034) & 0.077(0.028) & 0.086(0.026) & 0.068(0.047) \\
    forward selection & 0.174(0.046) & 0.225(0.035) & 0.195(0.034) & 0.136(0.025) \\
    beam search & 0.177(0.035) & 0.105(0.043) & 0.157(0.040) & 0.128(0.041) \\
    \hline
\end{tabular}
\caption{The means (and standard errors) of the misclassification rate on the prostate cancer gene expression dataset.}
\label{tab:prostate_beam}
\end{table}

The misclassification rates are reported in Table \ref{tab:prostate_beam}. Again, we observe that classification models perform similarly (KNN and logistic regression with $L_1$ regularization) or much better (LDA and SVM) when using features selected by beam search compared to using features selected by forward selection. In addition, KNN achieves lower misclassification rates using only $d=10$ features selected by beam search, although other classification models (LDA, SVM, and logistic regression with $L_1$ regularization) all achieve their lowest misclassification rates using all $p=339$ features. This is not surprising, since $d=10$ features is less than three percent of $p=339$ features, and therefore it might not be enough to contain all the useful signals. Still, the interpretability of the selected features is a key advantage of using beam search to perform feature selection. In this example, the biological relationship between the selected genes and prostate cancer might be worthy of further investigation.

\subsection{Lung Cancer Gene Expression Dataset}
The third dataset consists of $n=203$ samples and $p=1543$ genes, and the samples have already been classified into five groups: 186 samples correspond to four different kinds of lung cancer tissues, and 17 samples correspond to normal lung tissues.

\begin{table}[ht]
\centering
\begin{tabular}{ c c c c c }
    \hline
    & KNN & LDA & SVM & Logistic \\
    \hline
    all features & 0.092(0.010) & 0.056(0.015) & 0.092(0.021) & 0.096(0.012) \\
    forward selection & 0.137(0.026) & 0.121(0.031) & 0.178(0.014) & 0.137(0.027) \\
    beam search & 0.131(0.019) & 0.117(0.017) & 0.127(0.014) & 0.107(0.031) \\
    \hline
\end{tabular}
\caption{The means (and standard errors) of the misclassification rate on the lung cancer gene expression dataset.}
\label{tab:lung_beam}
\end{table}

The misclassification rates are reported in Table \ref{tab:lung_beam}. Once again, we see that classification models perform similarly (KNN and LDA) or much better (SVM and logistic regression with $L_1$ regularization) when using features selected by beam search compared to using features selected by forward selection. Also, all four classification models achieve their lowest misclassification rates using all $p=1543$ features, which is likely due to the fact that $d=10$ features, being less than one percent of $p=1543$ features, are not enough to retain all the discriminative power. Nevertheless, this example demonstrates that classification models could obtain slightly worse performance using an extremely small proportion of features selected by beam search. Combined with the interpretability of the selected features, these properties make beam search a useful feature selection algorithm when applied to real-world datasets.

\section{Discussion}\label{sec:discussion_beam}
In this paper, we have presented and proved some consistency results about the performance of classification models using a subset of features. In addition, we have proposed to use beam search to perform feature selection, which can be viewed as a generalization of forward selection. We have applied beam search to both simulated and real-world data, by evaluating and comparing the performance of different classification models using different sets of features. The results have demonstrated that beam search could outperform forward selection, especially when the features are correlated so that they have more discriminative power when considered jointly than individually. Moreover, in some cases classification models could obtain comparable performance using only ten features selected by beam search instead of hundreds of original features.

In the future, we plan to extend the exploration of beam search as a general feature selection algorithm to regression problems. In this paper we focused on classification problems, but it is also possible to modify our beam search algorithm to run on regression problems: instead of misclassification rate, another metric suitable for regression problems should be used to rank and select different subsets of features.

Another interesting direction worth pursuing is to consider integrating a good stopping rule into the beam search algorithm. Currently the number of selected features $d$ is fixed and serves as a hyperparameter. A better approach is to implement a stopping rule so that the beam search algorithm would stop adding more features when the marginal improvement in model performance is small. Doing so would eliminate the need to specify a fixed number of selected features in advance, and the number of features that are selected would depend on the specific dataset and model.

\bibliography{ref}

\end{document}